\theoremstyle{plain}
\newtheorem{theorem}{Theorem}[section]
\theoremstyle{definition}
\theoremstyle{remark}
\title{Supervised Optimism Correction: Be Confident When LLMs Are Sure}
\author{
 \textbf{ Junjie Zhang\textsuperscript{1}\thanks{Equal contribution.}}, 
 \textbf{ Rushuai Yang\textsuperscript{2}\footnotemark[1]}, 
 \textbf{ Shunyu Liu\textsuperscript{1}},
 \textbf{ Ting-En Lin\textsuperscript{3}},
 \textbf{ Fei Huang\textsuperscript{3}},
 \\ 
 \textbf{ Yi Chen\textsuperscript{2}},
 \textbf{ Yongbin Li\textsuperscript{3}\thanks{Corresponding authors.}},
 \textbf{ Dacheng Tao\textsuperscript{1}\footnotemark[2]}
\\
\\
 \textsuperscript{1}Nanyang Technological University, Singapore\\
 \textsuperscript{2}Hong Kong University of Science and Technology\\
 \textsuperscript{3}Tongyi Lab
\\
}
\begin{document}
\maketitle
\begin{abstract}

In this work, we establish a novel theoretical connection between supervised fine-tuning and offline reinforcement learning under the token-level Markov decision process, revealing that large language models indeed learn an implicit $Q$-function for inference.
Through this theoretical lens, we demonstrate that the widely used beam search method suffers from unacceptable over-optimism, where inference errors are inevitably amplified due to inflated $Q$-value estimations of suboptimal steps. 
To address this limitation, we propose \textit{\textbf{S}upervised \textbf{O}ptimism \textbf{C}orrection}~(SOC), which introduces a simple yet effective auxiliary loss for token-level $Q$-value estimations during supervised fine-tuning. 
Specifically, the auxiliary loss employs 
implicit value regularization
to boost model confidence in expert-demonstrated responses, thereby suppressing over-optimism toward insufficiently supervised responses.
Extensive experiments on mathematical reasoning benchmarks, including GSM8K, MATH, and GAOKAO, showcase the superiority of the proposed SOC with beam search across a series of open-source models.

\end{abstract}

\section{Introduction}

Recent advances in Large Language Models~(LLMs) have demonstrated remarkable success across diverse tasks such as instruction following~\cite{brown2020language,zhou2023instruction,taori2023alpaca,tao2024survey,liu2025survey}, code generation~\cite{liu2023rltf,le2022coderl,nijkamp2022codegen,jiang2024survey}, and medical diagnosis~\cite{zhang2023alpacare,wang2023chatcad}.
Within these developments, complex reasoning capabilities have attracted increasing attention from research communities, attributed to their capability of enabling LLMs to tackle intricate problem-solving~\cite{wei2022chain,yao2022react,kojima2022large}.
Despite these achievements, learning to reason remains a critical yet challenging task for LLMs, particularly for smaller models with limited parameters~\cite{liu2025can,zhang2025reasoning,zhang2025r1}. The inherent complexity stems from inefficient exploration of LLMs, as the combinatorial nature of the vocabulary space results in an exponential growth of potential reasoning paths~\cite{snell2024scaling}.

A prevalent paradigm for enhancing reasoning capabilities involves Supervised Fine-Tuning~(SFT) on high-quality demonstration data~\cite{brown2020language,yao2024mulberry,yang2024qwen2}, where models learn to imitate expert reasoning patterns through next-token prediction. The complementary strategies employ search-based decoding techniques during inference~\cite{snell2024scaling,xie2024self,ding2025dynamic}, such as beam search, which aims to enhance reasoning by exploring multiple candidate pathways. However, there is often an overlooked disconnect between the local token-level optimization of SFT and the global sequence-level objectives pursued by search-based decoding. 
This disconnect results in a critical misalignment: while SFT focuses on maximizing the likelihood of individual token predictions, search-based decoding operates by scoring entire sequences, which may not directly align with the local training objectives of LLMs.
By addressing this gap, we can potentially unlock more robust reasoning capabilities in LLMs, aligning training objectives more closely with inference-time goals.

\begin{figure*}[th]
  \includegraphics[width=\linewidth]{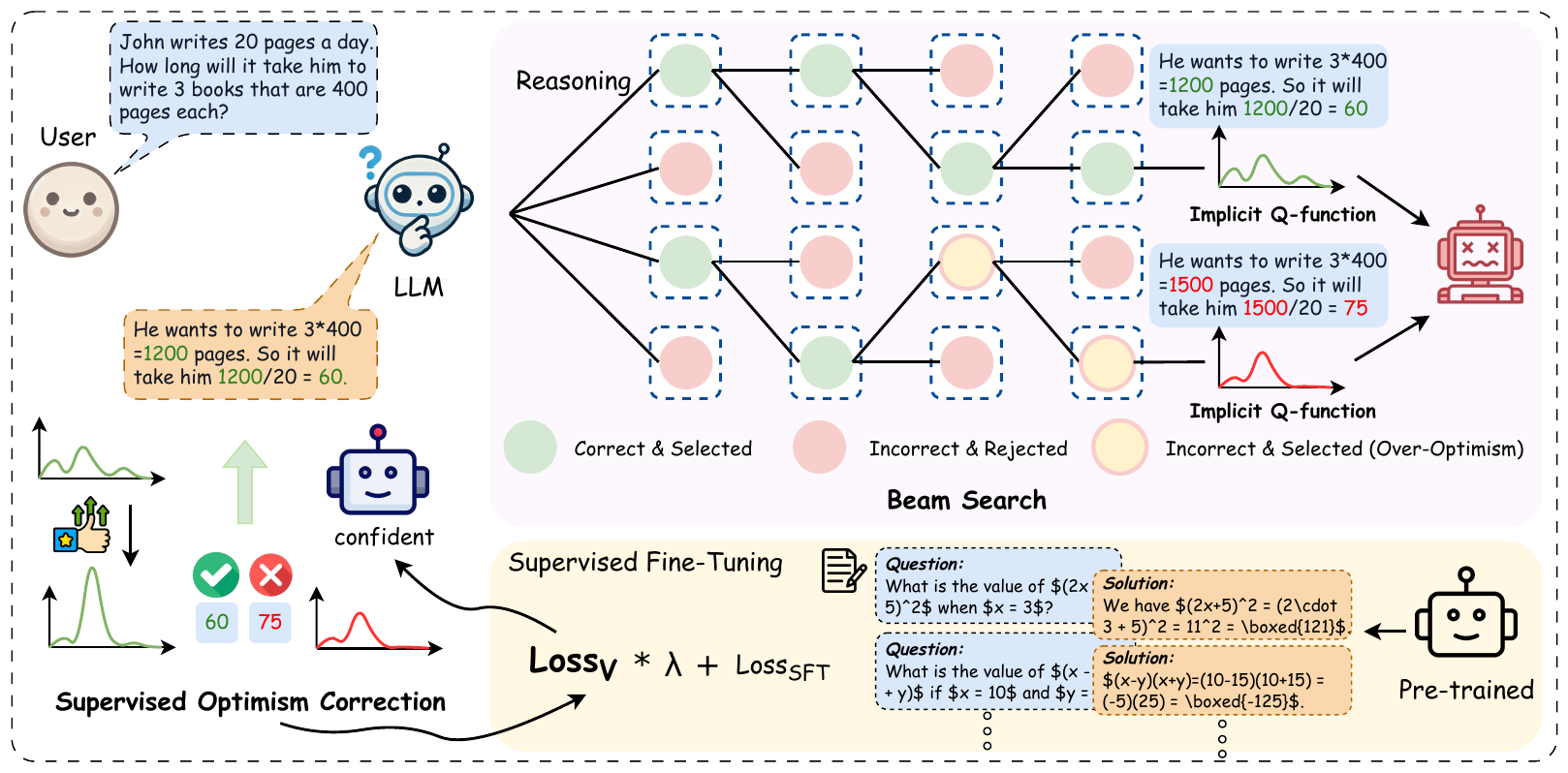}
  \caption {Illustration of Supervised Optimism Correction. Guided by an implicit $Q$-function, beam search suffers from the over-optimism problem during the decoding process, which confuses LLM for reliable response choice. In particular, the over-optimism can amplify errors during beam search, leading to the selection of incorrect trajectories with higher $Q$-values. To alleviate this problem, SOC introduces an auxiliary loss during SFT to boost model confidence in expert-demonstrated responses, about which LLM should be sure.}
    \label{fig: illustration example}
\end{figure*}

In this work, we employ the token-level Markov decision process to establish a novel theoretical connection between SFT and offline Reinforcement Learning~(RL)~\cite{levine2020offline,kumar2020conservative,kostrikov2021offline,lyu2022mildly}. We theoretically demonstrate that LLMs indeed learn an implicit $Q$-function to estimate the expected utility of token sequences during SFT. Through this lens, we further explore the over-optimism problem in the widely used beam search method, revealing that the search process disproportionately favors sequences with inflated $Q$-value estimations. This over-optimism arises because the beam search method autoregressively selects tokens with locally overestimated Q-values, thereby inevitably amplifying errors through cascading suboptimal steps.

To alleviate this problem, we propose Supervised Optimism Correction~(SOC), which introduces a simple auxiliary loss during SFT to give supervised responses a state-value bonus. Unlike prior RL-based methods that require explicit reward modeling, SOC operates purely within the SFT paradigm by imposing implicit value regularization. This regularization boosts confidence in expert-demonstrated data while potentially penalizing high $Q$-value estimations for insufficiently supervised reasoning steps. As a result, the model learns to autonomously prune low-quality reasoning paths during inference without relying on external verifiers or reward models. For instance, when encountering erroneous intermediate steps, the corrected $Q$-values suppress further exploration of those branches, mirroring human-like error recognition and recovery patterns, requiring no architectural modifications or additional inference-time computations.
Our core contributions are summarized as follows:
\begin{itemize}[leftmargin=*]
\item We establish a novel theoretical connection between SFT and offline RL, identifying and formalizing the over-optimism problem of implicit $Q$-functions for LLM beam search.
\item We develop SOC, a lightweight yet effective method that boosts model confidence in expert-demonstrated data through an auxiliary value regularization loss during SFT.
\item Extensive experiments demonstrate the effectiveness of SOC in mathematical reasoning benchmarks GSM8K, MATH, and GAOKAO, significantly improving the performance of open-source models like Qwen-2-1.5B, Qwen-2.5-3B, and Qwen-2.5-7B.
\end{itemize}

\label{intro}

\section{Preliminaries}
\subsection{Token-level MDP for LLMs}

We start by formulating the token generation process of LLMs as a token-level Markov Decision Process (MDP) \citep{DPO, from_r_to_q, zhong2024dpo, zhong2022gec}, enabling a structured analysis of their decision-making dynamics and reasoning capabilities. An MDP \cite{intro_RL} is typically defined by a tuple $(\mathcal{S}, \mathcal{A}, \mathcal{P}, R, \gamma)$, where $\mathcal{S}$ denotes the state space, $\mathcal{A}$ the action space, $\mathcal{P}$ the transition probability function, $R$ the reward function, and $\gamma \in [0,1]$ the discount factor, adopting a discount factor of $\gamma = 1$ throughout the paper. In our formulation, the state $s \in \mathcal{S}$ corresponds to the token generation context at step $t$, represented as 
$s_t = (x_0, x_1, \dots, x_t),$ which serves as the conditioning context for generating the next token $x_{t+1}$. The action space $\mathcal{A}$ is defined by the fixed vocabulary from which the next token is selected. The probability distribution over actions is parameterized by the LLM’s learned policy $\pi_\theta(a_t | s_t)$, with $a_t \in \mathcal{A}$ at each step $t$. In addition, the transition function $P(s_{t+1} | s_t, a_t)$ models the concatenation of the current state with the chosen action to form the subsequent state, \textit{i.e.} $s_{t+1} = (s_t, a_t)$. In the offline RL setting \citep{levine2020offline, kumar2020conservative, kostrikov2021offline}, the objective is to obtain a parameterized policy $\pi_\theta$ that maximizes the expected cumulative reward using an offline dataset $\mathcal{D}$. This can be formulated as:
\begin{equation}
\label{eq:offline_RL_objective}
    \max_{\theta} \mathbb{E}_{s_t\sim\mathcal{D}, a_t\sim \pi_\theta} \left[\sum_{t=0}^{T} \gamma^t R(s_t, a_t)\right],
\end{equation}
where the reward function $R(s_t, a_t)$ measures the quality of the generated output. In tasks such as mathematical reasoning, the reward is often sparse, with a terminal reward $R_{\text{outcome}} = 1$ indicating the correctness of the final result, while intermediate rewards are set to zero. When adopting a discount factor of $\gamma = 1$, the gradient of Equation~\eqref{eq:offline_RL_objective} with respect to the policy parameters is given by:
\begin{equation}
\begin{aligned}
\label{eq: gradient of maximum return}
&\mathbb{E}_{s_t\sim\mathcal{D}, a_t\sim \pi_\theta} \left[ \sum_{t=0}^{T} \nabla_{\theta} \log \pi_\theta(a_t | s_t) R(\tau) \right].
\end{aligned}
\end{equation}
According to the sparse reward assumption in the above MDP setting for correct $\tau$:
\begin{equation}
    R(\tau) := \sum_{t=0}^{T} \gamma^t R(s_t,a_t) = R_\text{outcome} = 1,
\end{equation} then Equation~\eqref{eq: gradient of maximum return} can be simplified to: 
\begin{equation}
    \begin{aligned}
    \label{eq: gradient of maximum return1}
    &\mathbb{E}_{s_t\sim\mathcal{D}, a_t\sim \pi_\theta} \left[\sum_{t=0}^{T} \nabla_{\theta} \log \pi_\theta(a_t | s_t)\right].
    \end{aligned}
\end{equation}
On the other hand, the pretrained model $\pi_\theta$ in the SFT stage aims to imitate the behavior of the expert policy $\pi^\star$. This can be expressed as
\begin{equation}
    \pi_{\theta} = \operatorname*{argmax}_{\theta} \mathbb{E}_{(s^\star,a^\star) \sim \mathcal{D}} \left[ \log \pi_\theta(a^\star | s^\star) \right].
\end{equation}
Note that the optimization problem above is a special case of the optimization problem encountered in offline reinforcement learning. The reward signal comes from matching the expert’s demonstrations. In addition, to avoid repetitive or suboptimal outputs, we can maximize the entropy of the policy $\mathcal{H}(\pi_\theta)$ simultaneously to encourage exploration and prevent the policy from becoming overly deterministic, the optimization problem will be written as
\begin{multline}
\label{eq: fomulate as soft RL}
    \pi_{\theta} = \operatorname*{argmax}_{\theta} \mathbb{E}_{(s^\star,a^\star) \sim \mathcal{D}} \left[ \log \pi_\theta(a^\star | s^\star) \right] \\
    + \mathcal{H}\left(\pi_\theta\left(\cdot|s^\star\right)\right),
\end{multline}
which is equivalent to a maximum entropy RL objective \citep{ziebart2010modeling,softsac} but with the reward coming from the expert's actions rather than an environment-based reward signal.

\subsection{Deriving the $Q$-function as LLM Logits}
In the general maximum entropy RL setting, the fixed-point solution of Equation~\eqref{eq: fomulate as soft RL} is given by \citep{ziebart2010modeling, from_r_to_q, guo2021efficient} as:
\begin{equation}
\label{eq: SQL equation from citation}
   \pi^\star(a | s) = \exp\left(Q^\star(s, a) - V^\star(s)\right),
\end{equation}
where $Q^\star(s, a)$ is the optimal $Q$-function, representing the accumulated reward starting from state $s$, taking action $a$, and following the optimal policy thereafter. The optimal value function $V^\star(s)$ is related to $Q^\star(s, a)$ and is given by:
\begin{equation}
\label{eq: define V}
   V^\star(s) = \log \sum_a  \exp(Q^\star(s, a)).
\end{equation}
By combining Equation~\eqref{eq: SQL equation from citation} and Equation~\eqref{eq: define V} and taking the logarithm of both sides, we obtain:
\begin{equation}
\label{eq: SQL equation}
   \pi^\star(a | s) = \frac{\exp(Q^\star(s, a))}{\sum_{a'} \exp(Q^\star(s, a'))}.
\end{equation}
Thus, the optimal policy is derived from the softmax of the corresponding $Q$-function. On the other hand, in the context of pretrained models in LLMs, the policy is also obtained from the softmax of logits:
\begin{equation}
\label{eq: softmax logits}
   \pi_\theta(a | s) = \frac{\exp(Q_\theta(s, a))}{\sum_{a'}  \exp(Q_\theta(s, a')) },
\end{equation}
where $Q_\theta(s, a)$ represents the logits generated by the pretrained model. These logits implicitly approximate the optimal $Q$-function, ensuring that the parameterized policy $\pi_\theta$ closely resembles the optimal policy $\pi^\star$. We will refer to LLM logits as the implicit $Q$-function throughout the paper.

\subsection{Beam Search Decoding}

Beam search is a heuristic search algorithm widely used in decoding for LLMs' test time \citep{pascual2020directed, sun-etal-2023-allies, liu2025can}. Given a pre-trained language model $\pi_\theta$ that generates tokens in an autoregressive manner, standard beam search aims to approximate the most probable output sequence by maintaining a fixed-size set of candidate sequences (beams) at each decoding step. Formally, at each step $t$, the method expands all hypotheses by considering the top-$k$ best candidate tokens according to their accumulated log-probability scores. This process continues until an end-of-sequence (EOS) token is generated or the maximum sequence length $T$ is reached. The algorithm is presented in Algorithm~\ref{alg:beam_search}.
\begin{algorithm}[h]
    \caption{Standard Beam Search Decoding}
    \label{alg:beam_search}
    \begin{algorithmic}[1]
        \REQUIRE Language model $\pi_\theta(a_t | s_t)$, question input $x_0$, beam width $k$, maximum sequence length $T$
        \STATE Initialize beam set $\mathcal{B} \gets \{(s_0, v_0)\}$, where $s_0 = x_0$ and score $v_0=0$
        \FOR{$t = 1$ to $T$}
            \FOR{each $(s_t, v_t) \in \mathcal{B}$}
                \STATE Compute next-token probabilities $\pi_\theta(a_t \mid s_t)$ for $a_t \in \mathcal{A}$
                \STATE Select the top-$k$ tokens $\mathcal{A}_t^\text{top}$ from $\mathcal{A}$ based on the accumulated score: 
                \STATE \quad \quad $v_{t+1} = v_t + \log \pi_\theta(a_t | s_t)$
                \FOR{each $a_t \in \mathcal{A}_t^\text{top}$}
                    \STATE Compute new state $s_{t+1} = (s_t, a_t)$
                    \STATE Add corresponding $(s_{t+1}, v_{t+1})$ to $\mathcal{B}$
                \ENDFOR
            \ENDFOR
        \ENDFOR
        \STATE Return best sequence: $\operatorname*{argmax}_{(s_T, v_T) \in \mathcal{B}} v_T$
    \end{algorithmic}
\end{algorithm}
\section{Supervised Optimism Correction}
In Subsection~\ref{sec: Over-optimism problem in Beam Search}, we first discuss the over-optimism problem observed in beam search during inference. Next, Subsection~\ref{sec: Impact of Value Function Estimation Error on Beam Search Performance} investigates a potential cause of this issue by analyzing how inflated $Q$-value estimation errors, particularly those arising from insufficiently supervised states, can amplify over-optimism through the maximization operation of beam search. Finally, Subsection~\ref{sec: V loss for optimism correction} introduces our proposed Supervised Optimism Correction (SOC) method, which incorporates an auxiliary $V$ loss to better align the response selection process with expert-demonstrated responses.
\subsection{Over-optimism Problem in Beam Search}
\label{sec: Over-optimism problem in Beam Search}

In this section, we explore the over-optimism problem in beam search, examining the factors that contribute to its exacerbation and its impact on response quality during inference. Beam search is widely used to generate sequences based on accumulated log-probability scores. However, over-optimism can arise when the search disproportionately favors sequences with inflated $Q$-value estimates, leading to suboptimal results. This phenomenon is also prevalent in traditional reinforcement learning contexts \citep{Thrun1999IssuesIU, van2016deep, kumar2020conservative, kostrikov2021offline, wen2024towards}. To formalize this issue, we express the beam search selection process in terms of the $Q$-function. At every intermediate step $T$, by incorporating Equation~\eqref{eq: SQL equation}, the accumulated log-probability over the sequence can be expressed as:
\begin{align}
\label{eq: beam search, logsum to Q}
&\sum_{t=0}^{T} \log \pi(a_t | s_t) =
\sum_{t=0}^{T} \left(Q(s_t,a_t)-V(s_t) \right)
\\ =& \sum_{t=0}^{T - 1} \left(Q(s_t,a_t)-V(s_{t+1})\right) + Q(s_T,a_T) \\& ~ \quad \quad - V(s_0). \nonumber
\end{align}
This can be further simplified using the Bellman equation for the value function for all $t < T$:
\begin{equation}
Q(s_t,a_t) = \mathbb{E}_{s_{t+1}}[R(s_t, a_t) + \gamma V(s_{t+1})].
\end{equation}
Under the assumptions that the transition dynamics $P(s_{t+1} |  s_t, a_t)$ are deterministic, $\gamma = 1$, and intermediate rewards are zero by default, the accumulated log-probability simplifies to:
\begin{align}
\sum_{t=0}^{T} \log \pi(a_t | s_t) = Q(s_T,a_T) - V(s_0).
\end{align}
Since all candidates share the same question input $s_0$, $V(s_0)$ is same and beam search relies heavily on these $Q$-values of the generated token to select the top-$k$ candidates at each step, as described in Equation~\ref{eq: beam search, logsum to Q}. However, the process may include candidates with inaccurate $Q$-value estimates $Q(s_T, a_T)$, which can result from limited supervision during the fine-tuning stage. When $Q$-values are overestimated in insufficiently supervised states, this overestimation can amplify errors during beam search, leading to the selection of suboptimal trajectories with higher $Q$-values, even if they are not aligned with the most reliable, well-supervised paths. Consequently, the final output is biased toward these less reliable paths. Figure~\ref{fig: illustration example} illustrates a case example of the over-optimism problem. In the next subsection, we investigate the causes of over-optimism in beam search, with a particular focus on how $Q$-value estimation errors contribute to its exacerbation and the resulting impact on beam search performance.

\subsection{Impact of Value Function Estimation Error on Beam Search Performance}
\label{sec: Impact of Value Function Estimation Error on Beam Search Performance}
To analyze how the estimation error of $Q$ affect the sampling process in inference time, we start to obtain the key observation from the gradient of the cross-entropy loss. Recall that in SFT stage, the model is trained to align its predicted distribution $\pi_\theta$ with the target distribution $\pi^\star$ by minimizing the cross-entropy loss \citep{le2022coderl, qwen2.5math}:
\begin{equation}
  \mathcal{L}_{\text{SFT}} = \mathbb{E}_{s\sim \mathcal{D}}\left[-\sum_{a} \pi^\star(a|s) \log \pi_\theta(a|s) \right],
\end{equation}where target distribution $\pi^\star$ could be one-hot encoding as dataset $\mathcal{D}$ indicates the set of high-quality demonstrations. By incorporating Equation~\eqref{eq: SQL equation} into the gradient form of $\mathcal{L}_{\text{SFT}}$, we can see that the $\mathcal{L}_{\text{SFT}}$ minimize the estimation error of implicit $Q$, \textit{i.e.}  

\begin{equation}
    \begin{aligned}
        \label{eq: gradient of sft}
        &\nabla_\theta \mathcal{L}_{\text{SFT}} = \mathbb{E}_{s\sim \mathcal{D}}\left[ -\sum_{a} \frac{\pi^\star(a|s) }{\pi_\theta(a|s)} \frac{\partial \pi_\theta(a|s)}{\partial \theta} \right] 
    \\ = & \mathbb{E}_{s\sim \mathcal{D}}\left[ \sum_{a}  \underbrace{\left(\widetilde{Q}_\theta(s,a) - \widetilde{Q}^\star(s,a) \right)}_{\text{estimation error}}  \frac{\partial Q_\theta(s,a)}{\partial \theta} \right],        
    \end{aligned}
\end{equation}
where $\widetilde{Q}(s,a) \in [0,1]$ refers to normalized value function for each action $a$. We give the detailed proof in Appendix~\ref{app: The Derivation of Estimation Error}. Intuitively, the learned policy $\pi_\theta$ is derived from the softmax of $Q_\theta(s,a)$, so estimation errors in $Q_\theta(s,a)$ will lead to deviations from the optimal policy $\pi^\star$, and the policy may favor suboptimal actions, leading to insufficiently supervised action and misguide to the suboptimal state. This error could inevitably be amplified during inference since the beam search takes action based on the estimated $Q_\theta(s,a)$ as shown in Equation~\eqref{eq: beam search, logsum to Q}.

\subsection{$V$ Loss for Optimism Correction}
\label{sec: V loss for optimism correction}
Based on the analysis in the previous subsection, one approach to mitigating the over-optimism problem is to reduce the impact of estimation error. To this end, we propose an auxiliary objective $V$ loss, defined as:
\begin{equation}
\label{eq: v loss}
  \mathcal{L}_{V} = \mathbb{E}_{s\sim \mathcal{D}}\left[-\log \sum_a \exp Q_\theta(s,a)\right].
\end{equation}
Intuitively, this auxiliary loss serves to boost the overall implicit value function for high-quality, labeled data. By elevating the value estimates associated with supervised trajectories, the model is more likely to preferentially select these trajectories during inference, thereby counteracting the bias introduced by overestimated $Q$ values in insufficiently supervised states. Overall, our total objective in the training stage can written as:
\begin{equation}
  \mathcal{L}_{\text{overall}} = \mathcal{L}_{\text{SFT}} + \lambda \cdot \mathcal{L}_{V}
\end{equation}
with a tuning hyperparameter $\lambda$.
\subsection{Effect of the $V$ Loss Update} 
\label{sec: Effect of the V Loss Update}
To gain a mechanistic understanding of the $V$ loss, we analyze the gradient of its objective function:  
\begin{equation}
\nabla_\theta \mathcal{L}_{\text{V}} = \mathbb{E}_{s\sim \mathcal{D}}\left[-\sum_a \widetilde{Q}_\theta(s,a) \frac{\partial Q_\theta(s,a)}{\partial \theta} \right].
\end{equation}  
This expression reveals how the update is guided by the weighted sum of the $Q$-function gradients for labeled data, where $\widetilde{Q}_\theta(s,a)$ acts as an implicit weighting factor influencing parameter adjustments. We provide an illustrative example for the update effect between $\mathcal{L}_{\text{SFT}}$ and $\mathcal{L}_{\text{V}}$ in Figure~\ref{fig: illustation Q}. Specifically, since 
$\sum_a\widetilde{Q}_\theta(s,a) = 1$ for all state $s$, A larger \(\widetilde{Q}_\theta(s,a)\) for a specific action increases the magnitude of its corresponding gradient contribution, thereby resulting in a more substantial update for \(Q_\theta(s,a)\). Consequently, actions with higher \(\widetilde{Q}_\theta(s,a)\) experience a more significant increase in their $Q$ values during optimization. Compared to the SFT update for a supervised action, the update magnitude induced by the $V$ loss is different. In particular, when $Q(s,a)$ is close to $Q^\star(s,a)$, the gradient from the SFT objective tends to be small, as shown in Equation~\eqref{eq: gradient of sft}, potentially resulting in only minimal updates. In contrast, the $V$ loss continues to increase $Q(s,a)$ for the supervised action, thereby compensating for the estimation gap along the supervised trajectories with appropriated $\lambda$. 
\section{Theoretical Analysis}
In this section, we provide a theoretical analysis of our objective.
\begin{theorem}[Contraction of Value Differences]
\label{thm:convergence_values}
Let $V_\theta(s)$ be the approximate value function of state $s$. Suppose that the value function is always positive for any state $s$. If the objective in SFT includes a additional term in Equation~\eqref{eq: v loss}, then after one step of gradient descent with learning rate $\alpha \in [0,1]$, the gap between adjacent states' values contracts, \textit{i.e.},  
\[
| V'_\theta(s_t) - V'_\theta(s_{t+1}) | \leq | V_\theta(s_t) - V_\theta(s_{t+1}) |, \quad \forall t,
\]
where $V'$ denotes the updated value function after one optimization step.
\end{theorem}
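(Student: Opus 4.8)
The plan is to track how one gradient-descent step on $\mathcal{L}_V$ updates the value function $V_\theta(s) = \log\sum_a \exp Q_\theta(s,a)$. First I would compute $\nabla_\theta \mathcal{L}_V$, which the excerpt already gives as $\mathbb{E}_{s\sim\mathcal D}[-\sum_a \widetilde Q_\theta(s,a)\,\partial_\theta Q_\theta(s,a)]$; note that $-\partial_\theta \mathcal{L}_V$ restricted to a single state $s$ is exactly $\sum_a \widetilde Q_\theta(s,a)\,\partial_\theta Q_\theta(s,a)$, which is the positive-coefficient combination of logit gradients that raises $V_\theta(s)$. The key observation is that, to first order in the learning rate $\alpha$, $V'_\theta(s) \approx V_\theta(s) + \alpha\, g(s)$ where $g(s) = \langle \nabla_\theta V_\theta(s),\, \sum_a\widetilde Q_\theta(s,a)\nabla_\theta Q_\theta(s,a)\rangle$ summed over the sampled states. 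Using $\nabla_\theta V_\theta(s) = \sum_a \widetilde Q_\theta(s,a)\nabla_\theta Q_\theta(s,a)$ (the softmax derivative of the log-sum-exp), the self-contribution of state $s$ to $g(s)$ is $\|\nabla_\theta V_\theta(s)\|^2 \ge 0$, so each state's value is pushed upward, and the increment is controlled by $\alpha$.

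Second, I would argue that the increment to $V_\theta(s)$ is, up to the chosen normalization, proportional to $V_\theta(s)$ itself — this is where the positivity hypothesis ``$V_\theta(s) > 0$'' and the bounded learning rate $\alpha\in[0,1]$ enter. Concretely, the intended reading is that the update acts multiplicatively: $V'_\theta(s) = (1+\alpha c(s))\,V_\theta(s)$ for some state-independent-enough factor, or more modestly that $V'_\theta(s_t) - V'_\theta(s_{t+1})$ can be written as $(1-\alpha)\bigl(V_\theta(s_t) - V_\theta(s_{t+1})\bigr)$ plus terms that vanish. Then $|V'_\theta(s_t) - V'_\theta(s_{t+1})| = (1-\alpha)\,|V_\theta(s_t) - V_\theta(s_{t+1})| \le |V_\theta(s_t) - V_\theta(s_{t+1})|$ since $\alpha\in[0,1]$, giving the contraction. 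I would make this precise by assuming the parametrization is expressive enough that the $V$-loss gradient step can be analyzed state-wise (the logits at different states are effectively independent directions), which is the standard tabular-style idealization used elsewhere in the paper.

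The main obstacle is the gap between the first-order/statewise heuristic and an honest bound: a genuine gradient step couples all states through shared parameters $\theta$, so the cross-terms in $g(s)$ (inner products of $\nabla_\theta V_\theta(s)$ with $\nabla_\theta Q_\theta(s',a)$ for $s'\neq s$) are not obviously sign-controlled, and the exponential nonlinearity in $V_\theta = \log\sum\exp Q_\theta$ means the update is not exactly multiplicative. I expect the proof to sidestep this by adopting the tabular assumption (each $Q_\theta(s,\cdot)$ has its own parameters), reducing to a per-state computation where raising every logit at $s$ by the same amount shifts $V_\theta(s)$ by that amount, and then choosing the update magnitude so that $V_\theta(s)\mapsto (1-\alpha)V_\theta(s) + \alpha\cdot(\text{something that cancels in the difference})$, or by directly showing the map $V_\theta(s)\mapsto V'_\theta(s)$ is a contraction toward a common value. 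I would also double-check the edge cases $\alpha = 0$ (trivial, no update) and $\alpha = 1$ (the bound becomes equality or the difference collapses), since those pin down the exact form of the claimed multiplicative factor and justify where positivity of $V_\theta$ is actually needed.
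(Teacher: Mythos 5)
Your proposal does not reach the paper's argument, and the specific functional forms you guess for the update would not yield contraction. The paper's proof is entirely tabular: it treats $V(s)$ as a directly updatable quantity, reads the auxiliary objective as minimizing $-\log V(s)$, and writes one gradient step as $V'(s) = V(s) + \alpha\cdot\frac{1}{V(s)}$. The entire mechanism of contraction is that the increment $\alpha/V(s)$ is \emph{decreasing} in $V(s)$: the state with the smaller value receives the larger boost, so
\[
V'(s_t)-V'(s_{t+1}) \;=\; \left(1-\frac{\alpha}{V(s_t)\,V(s_{t+1})}\right)\left(V(s_t)-V(s_{t+1})\right),
\]
and the prefactor has modulus at most one. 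Your second step goes in the opposite direction: you posit that the increment is \emph{proportional} to $V_\theta(s)$, i.e.\ $V'(s)=(1+\alpha c)V(s)$ with $c$ roughly state-independent, but such an update multiplies the gap by $1+\alpha c\ge 1$ and therefore expands it; your alternative form $(1-\alpha)\bigl(V(s_t)-V(s_{t+1})\bigr)$ is asserted without derivation. The key idea --- that the gradient of $-\log V$ with respect to $V$ is $-1/V$, making the per-state boost inversely proportional to the current value --- is missing, and without it the contraction does not follow.

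Two of your side remarks are nonetheless on target. You correctly anticipate that the paper sidesteps parameter coupling across states by an implicit tabular idealization (its proof never touches $\nabla_\theta$ at all), and you are right that an honest first-order analysis through shared parameters would leave uncontrolled cross-terms. You might also observe that even the paper's final inequality $\left|1-\frac{\alpha}{V(s_t)V(s_{t+1})}\right|\le 1$ requires $V(s_t)V(s_{t+1})\ge \alpha/2$, which positivity of $V$ alone does not guarantee, so the stated hypotheses are not quite sufficient as written. Finally, note a definitional slip that partly explains your divergence: the main text sets $V(s)=\log\sum_a\exp Q_\theta(s,a)$, so that $\mathcal{L}_V=-V(s)$, whereas the appendix analyzes $-\log V(s)$, implicitly taking $V(s)=\sum_a\exp Q_\theta(s,a)$; your proposal follows the first reading, the paper's proof the second.
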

We provide a detailed proof in Appendix~\ref{app: proof}. Theorem~\ref{thm:convergence_values} states that the auxiliary loss encourages the implicit value function of neighboring states to become closer. To further illustrate the benefit of this regularization, consider the policy evaluation problem in a token-level MDP with sparse rewards, where the agent only receives a reward at the end of the trajectory while all intermediate rewards are zero. In this setting, the optimal value function satisfies the Bellman equation:  
\begin{equation}
\begin{aligned}
V^\star(s_t) = r\left(s_t, \pi^\star(a_t | s_t)\right) + V^\star(s_{t+1}).
\end{aligned}
\end{equation}
Since the reward is zero for all intermediate states, it follows that:
\begin{equation}
\begin{aligned}
V^\star(s_t) = V^\star(s_{t+1}), \forall t \text{ (intermediate states)}.
\end{aligned}    
\end{equation}
Thus, in sparse reward settings, the optimal gap between the values of adjacent states is naturally small. By minimizing this gap during training, the auxiliary loss guides the learned value function to better reflect the underlying structure of the sparse reward MDP, rather than artificially transforming the sparse reward into a dense one, thereby facilitating more stable and efficient policy evaluation for sparse reward. Moreover, in sparse reward settings, value information needs to propagate over many steps. By encouraging smoother value estimates between adjacent states, the auxiliary loss effectively reduces variance in value updates, which can lead to more robust value estimation \citep{schulman2015high, schulman2017proximal}.

\begin{figure}[h]
  \includegraphics[width=\linewidth]{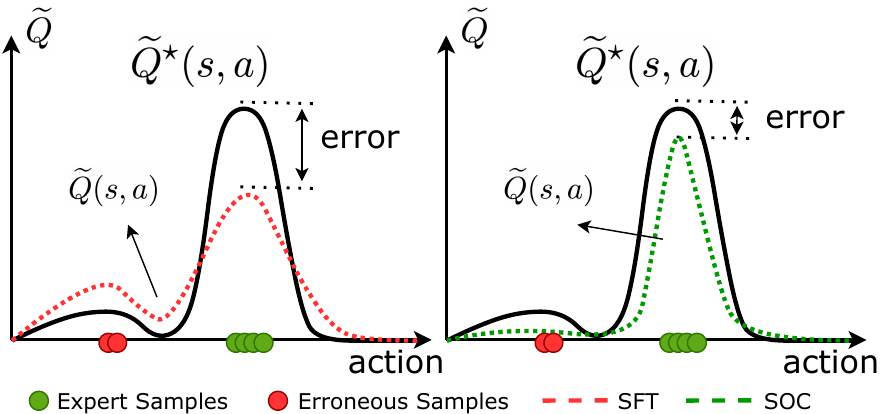}
  \caption {An illustrative example of gradient update magnitude on $Q$-value estimation in SFT and SOC. (Left) In SFT, small gradient updates lead to insufficient supervision, causing persistent overestimation errors and favoring suboptimal actions (red spots). (Right) SOC mitigates overestimation by aligning estimated values with expert demonstrations (green spots), reducing errors, and improving decision-making in inference times.}
    \label{fig: illustation Q}
\end{figure}

\begin{table*}[!t]
\caption{Results (accuracy \% of pass@1) on mathematical reasoning tasks. We report the results of each model sampling with beam search~(beam width of 5) and greedy decoding. For results of GSM8K and MATH-500, models are prompted with 4-shot demonstrations; for GAOKAO-EN, it is zero-shot. There is no sampling process in both beam search and greedy decoding, and therefore, the results are deterministic.}
\label{main-result}
\vskip 0.15in
\begin{center}
\begin{sc}
\resizebox{\textwidth}{!}{%
\begin{tabular}{l|lc|lc|lc}
\toprule
\multirow{2}{*}{Model} & 
\multicolumn{2}{c|}{GSM8K} & 
\multicolumn{2}{c|}{MATH-500} & 
\multicolumn{2}{c}{GaoKao-EN}  \\ 
\cmidrule(lr){2-3} \cmidrule(lr){4-5} \cmidrule(lr){6-7} 
& Beam & Greedy & Beam & Greedy & Beam & Greedy \\ 
\midrule
\textit{Qwen-2-1.5B-Base}         & 56.1   & 48.7  & 25.6  & 23.6    &19.5 &15.8       \\
\quad+SFT(\texttt{DATA-$\beta$})           & 54.9       & 56.4 & 24.0  & 20.2 &17.7 &16.1 \\
\quad+SFT(\texttt{DATA-$\beta$})+SOC        & 56.0\color{red}{(+1.1)}   & 55.6  &25.4\color{red}{(+1.4)} & 21.6 &18.2\color{red}{(+0.5)} &17.1\\
\quad+SFT(\texttt{DATA-$\alpha$})           & 69.1     & 64.6  & 25.0   & 24.0 & 23.9 &22.3 \\
\quad+SFT(\texttt{DATA-$\alpha$})+SOC      & 69.4\color{red}{(+0.3)}   & 64.3 & 26.6\color{red}{(+1.6)}   & 24.4 &27.3\color{red}{(+3.4)} &23.4\\

\midrule
\textit{Qwen-2.5-3B-Base}           & 51.5 &  77.0  & 40.8 & 45.0  &31.4 &41.8    \\
\quad+SFT(\texttt{DATA-$\beta$})                & 52.5     & 76.3  &40.0 & 43.4  &30.1 &41.3\\
\quad+SFT(\texttt{DATA-$\beta$})+SOC          & 75.4\color{red}{(+22.9)}   & 69.4 & 41.6\color{red}{(+1.6)} &36.0 & 30.4\color{red}{(+0.3)} &31.7\\
\quad+SFT(\texttt{DATA-$\alpha$})               & 79.2     & 81.7  &49.6 &49.6  &34.8 &46.5\\
\quad+SFT(\texttt{DATA-$\alpha$})+SOC         & 83.5\color{red}{(+4.3)}   &  80.8 &54.4\color{red}{(+4.8)} &48.4 &46.5\color{red}{(+11.7)} &43.9  \\

\midrule
\textit{Qwen-2.5-7B-Base}               & 58.7  & 85.3  &8.8 &54.6   &33.0 &47.3   \\
\quad+SFT(\texttt{DATA-$\beta$})                & 79.9  & 84.0  &41.0 &47.2 &44.9 &44.2\\
\quad+SFT(\texttt{DATA-$\beta$})+SOC          & 83.7\color{red}{(+3.8)}   & 83.5 &41.4\color{red}{(+0.4)} &48.4  &45.7\color{red}{(+0.8)} &44.2\\
\quad+SFT(\texttt{DATA-$\alpha$})               & 54.7    & 87.3 &41.4 &56.0 &28.6 &51.9\\
\quad+SFT(\texttt{DATA-$\alpha$})+SOC         & 87.9\color{red}{(+33.2)}     & 86.7  &60.4\color{red}{(+19.0)}  &53.4  &51.4\color{red}{(+22.8)} &49.1 \\

\bottomrule
\end{tabular}
}
\end{sc}
\end{center}
\vskip -0.1in
\end{table*}

\section{Experiments}
\label{exps}

In this section, we empirically demonstrate the effectiveness of SOC on the mathematical reasoning ability of LLMs. In Subsection~\ref{subsec:datasets} and~\ref{subsec:setup}, we introduce the datasets and setup of our experiments. We present the results of LLMs scaling from 1.5B to 7B parameters in Subsection~\ref{subsec:mainresults} and provide an ablation study in Subsection~\ref{subsec:ablation}.

\subsection{Datasets}
\label{subsec:datasets}
We conduct experiments on mathematics problem datasets: (1)~GSM8K~\cite{gsm_data}, which consists of 8.8K high quality linguistically diverse grade school math word problems split into 7.5K training set and 1.3K test set, and (2)~MATH~\cite{math_data}, which consists of problems from mathematics competitions with 5 difficulty level with 7.5K training set and 5.0K test set. (3)~We additionally include a recently released dataset STEP-DPO-10K~\cite{stepdpo} as the training dataset.

In the experiments, we adopt a filtered STEP-DPO-10K and a filtered MATH dataset as the training set. Although the STEP-DPO-10K is curated to demonstrate a step-level preference of solutions, we only chose the full and preferred samples and regard them as supervised demonstrations as they are correct and complete question and answer pairs, consisting of 10.3K samples. Correspondingly, we chose the samples with the highest difficulty level (level 5) as the training dataset of MATH, which consists of 2.8K samples. Hereafter, we refer to the filtered STEP-DPO-10K and the filtered MATH as \texttt{DATA-$\alpha$} and \texttt{DATA-$\beta$}. As many previous works, we use the GSM8K, MATH-500~\cite{lightman2023lets}, and GaoKao2023En~\cite{liao-etal-2024-mario} test sets for evaluation.

\subsection{Experimental Setup}
\label{subsec:setup}
In our experiments, we train a series of Qwen base models, Qwen-2-1.5B-Base, Qwen-2.5-3B-Base, and Qwen-2.5-7B-Base. We conduct SFT with high-quality math demonstrations to activate the base models' math reasoning ability to handle math problems. The \texttt{DATA-$\alpha$} and \texttt{DATA-$\beta$} datasets are used independently as two distinct training settings to show the robustness of our method. 

We utilize the accuracy on the test set of GSM8K, MATH-500, and GaoKao2023En as the evaluation metric. Specifically, LLMs are prompted with a few shots CoT of math problem solutions and output format requirements, such as generating the final answer in the \texttt{boxed\{\}}. LLMs predict the solutions with beam search and we compare the predicted answer with the ground truth answer to calculate the accuracy. Following the previous work~\cite{qwen2.5math}, we use the same few shots CoT prompt and set the maximum generation length as 2048. Also, we apply the chat template for structural format during the training, which consists of some special tokens like \texttt{<|im\_start|>} and \texttt{<|im\_end|>}. For evaluation, we use two kinds of CoT prompts~(refer to Appendix~\ref{app: prompt} for details), one consists of the special tokens of SFT, and another does not. We report the better results of SFT models among the two prompts and use the same prompt for SOC, which ensures a fair comparison and reliable results regarding the impact of prompts. 

\subsection{Main Results}
\label{subsec:mainresults}
The goal of our empirical results is to show the effectiveness of SOC by improving the performance of beam search. In the main experiments, we SFT the base models by \texttt{DATA-$\alpha$} and \texttt{DATA-$\beta$} with auxiliary loss of SOC. The results are shown in Table \ref{main-result}. SOC significantly improves the beam search performance over SFT on both training datasets, evaluated on three benchmarks. Despite its simplicity, SOC consistently improves the performance of SFT models in beam search decoding and gains even more than 20 percent accuracy improvement in several settings by generating more confident and reliable reasoning steps during the beam search. 

From the results of the base models, we find that beam search has far worse performance than greedy decoding~(refer to the example in Subsection~\ref{subsec:ablation}). This issue stems from inaccurate $Q$-value estimation. It exerts an insignificant impact on greedy sampling if it does not alter the action of each individual step. However, it significantly degrades beam search results because it influences evaluation of the entire sequence and gets amplified during the inference process. By mitigating the impact of value function estimation error with optimism correction, SOC improves the performance of beam search and significantly alleviates this issue.

\subsection{Ablation}
\label{subsec:ablation}
\paragraph{Beam Search Width} To investigate the impact of the beam search width on performance and how SOC corrects the over-optimism in various searching spaces, we compare the Base, SFT, and SOC across different beam widths. The results are shown in Figure~\ref{fig: ablation}. We observe that:~(1)~on GSM8K, SFT fails to improve the performance of the Base model on beam search, instead, slightly degrading it. In contrast, SOC significantly enhances performance, highlighting its superiority.~(2)~on MATH-500, Base model has a limited performance compared to its accuracy of greedy decoding~(54.6\%)~as shown in Table~\ref{main-result}, demonstrating the harmful effect of the over-optimism problem on beam search.~(3)~on both benchmarks, SOC consistently outperforms Base model and SFT across beam widths ranging from small to large, validating that after optimism correction, the correct reasoning candidate consequences are favored in the beam search process even with small searching space. The results exhibit that, with supervised optimism correction, beam search can find better responses in fewer search branches, thereby reducing the search space and inference cost.
\begin{figure}[t]
  \includegraphics[width=\linewidth]{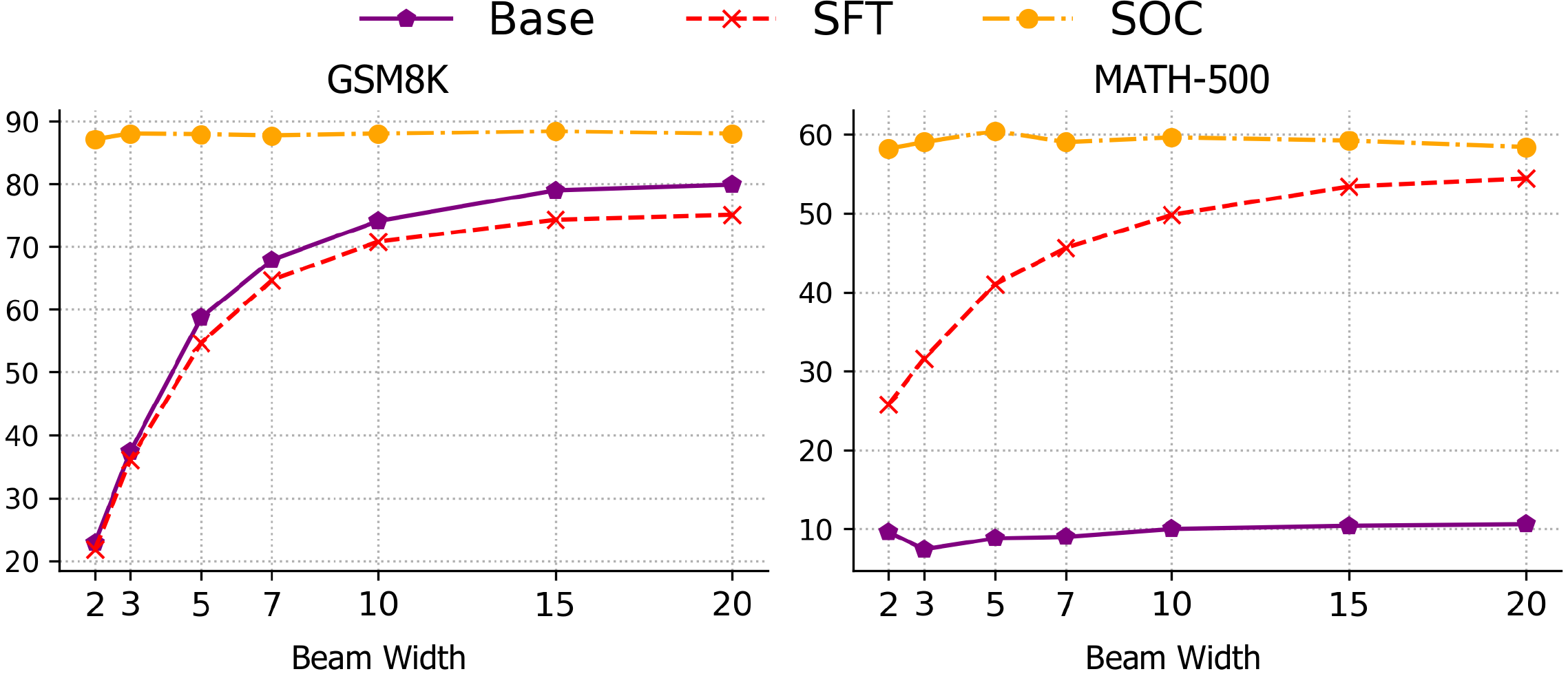}
  \caption {Performance of Qwen-2.5-7B-Base, SFT, and SOC across different beam widths. SOC consistently outperforms Base model and SFT across various beam widths on both benchmarks, showcasing better performance with less computation demand for the inference.}
    \label{fig: ablation}
\end{figure}

\paragraph{Impact of $\lambda$} In the main experiments, we set the hyperparameter $\lambda$ in Equation~\eqref{eq: v loss} to 0.2 in the results of Table~\ref{main-result}. To further analyze the impact of $\lambda$ on the performance of SOC, we conduct an ablation study by varying $\lambda$ and evaluating its effect on the performance of SOC. The results are shown in Table~\ref{ablation-lambda}.

We range $\lambda$ in \texttt{\{0.1, 0.2, 0.3\}} for 7B model with \texttt{DATA-$\beta$} and \texttt{DATA-$\alpha$}, and 3B model with \texttt{DATA-$\alpha$}, to analyze how $\lambda$ affects the performance of SOC for different datasets and models. In general, we find that the performance is not sensitive to the value of $\lambda$. Comparing 7B model with \texttt{DATA-$\beta$} and \texttt{DATA-$\alpha$}, we find that SOC is more effective and robust on \texttt{DATA-$\alpha$} than \texttt{DATA-$\beta$}, which can be due to the larger~(4times) dataset size of \texttt{DATA-$\alpha$}. On \texttt{DATA-$\alpha$}, 7B model gains more improvement than 3B model for all the $\lambda$ values and for both the best performance is achieved with $\lambda$=0.1.

We observe that SOC achieves the best performance when $\lambda$ is set to 0.2, which is consistent across different base models and datasets. This indicates that a moderate value of $\lambda$ effectively balances the influence of the auxiliary $V$ loss, leading to optimal performance improvements.

\begin{table*}[!t]
\caption{Results (accuracy \% of pass@1) on mathematical reasoning tasks. We report the results of each model sampling with beam search~(beam width of 5), to show the impact of different $\lambda$ on the performance of SOC.}
\label{ablation-lambda}
\vskip 0.15in
\begin{center}
\begin{sc}
\resizebox{\textwidth}{!}{%
\begin{tabular}{l|c|c|c|l}
\toprule
Model & GSM8K & MATH-500 & GaoKao-EN & Average \\
\midrule
\textit{Qwen-2.5-7B-Base} & 58.7 & 8.8 & 33.0 & 33.5 \\
\quad+SFT(\texttt{DATA-$\beta$}) & 79.9 & 41.0 & 44.9 & 55.3 \\
\quad+SFT(\texttt{DATA-$\beta$})+SOC($\lambda$=0.1) & 79.7 & 41.4 & 44.7 & 55.3\color{red}{(+0.0)} \\
\quad+SFT(\texttt{DATA-$\beta$})+SOC($\lambda$=0.2) & 83.7 & 41.4 & 45.7 & 56.9\color{red}{(+1.6)} \\
\quad+SFT(\texttt{DATA-$\beta$})+SOC($\lambda$=0.3) & 81.4 & 41.2 & 46.0 & 56.2\color{red}{(+0.9)} \\
\quad+SFT(\texttt{DATA-$\alpha$}) & 54.7 & 41.4 & 28.6 & 41.6 \\
\quad+SFT(\texttt{DATA-$\alpha$})+SOC($\lambda$=0.1) & 86.6 & 60.8 & 52.7 & 66.7\color{red}{(+25.1)} \\
\quad+SFT(\texttt{DATA-$\alpha$})+SOC($\lambda$=0.2) & 87.9 & 60.4 & 51.4 & 66.6\color{red}{(+25.0)} \\
\quad+SFT(\texttt{DATA-$\alpha$})+SOC($\lambda$=0.3) & 87.2 & 57.6 & 52.2 & 65.7\color{red}{(+24.1)} \\
\midrule
\textit{Qwen2.5-3B-Base} & 51.5 & 40.8 & 31.4 & 41.2 \\
\quad+SFT(\texttt{DATA-$\alpha$}) & 79.2 & 49.6 & 34.8 & 54.5 \\
\quad+SFT(\texttt{DATA-$\alpha$})+SOC($\lambda$=0.1) & 82.6 & 55.4 & 47.3 & 61.8\color{red}{(+7.3)} \\
\quad+SFT(\texttt{DATA-$\alpha$})+SOC($\lambda$=0.2) & 83.5 & 54.4 & 46.5 & 61.5\color{red}{(+7.0)} \\
\quad+SFT(\texttt{DATA-$\alpha$})+SOC($\lambda$=0.3) & 84.0 & 51.4 & 49.4 & 61.6\color{red}{(+7.1)} \\
\bottomrule
\end{tabular}
}
\end{sc}
\end{center}
\vskip -0.1in
\end{table*}

\section{Related Work}
\subsection{RL for LLM}
Previous studies have demonstrated that RL enhances the performance of LLMs by optimizing reward feedback. These approaches typically involve RL from Human Feedback~(RLHF)~\citep{stiennon2020learning,DPO,liu2025survey} to align LLMs with human preference, self-correction~\citep{kumar2024training}, or direct fine-tuning for reasoning ability from the base model~\citep{guo2025deepseek}. These techniques enable LLMs to generate more aligned, accurate, and coherent responses. While our work follows a similar analysis, particularly in the context of offline RL~\cite{kumar2020conservative,lyu2022mildly,qing2024a2po,zhang2023uncertainty,wen2024contrastive,wen2024towards}, we focus on the overestimation problem during SFT. We draw inspiration from offline RL to examine this problem within the SFT stage, offering a novel perspective on how overestimation impacts the optimization process. 

\subsection{Search-based methods of LLMs}
Search-based methods during test-time computation are crucial for enabling models to improve their output quality \citep{snell2024scaling,liu2025can}. These methods include beam search~\cite{snell2024scaling}, Best-of-N~(BoN) sampling~\cite{brown2024large}, and lookahead-search methods, like MCTS~\citep{zhang2406rest,xie2024monte}, generate sequences of $k$ steps and evaluate which paths to retain for further exploration. Among these methods, beam search is a simple and widely used search method without extra reward models or verifiers. 
\citet{li2022positive} first investigates the positive-incentive effects of noise and pioneers a series of studies on leveraging structured randomness to support task performance, offering valuable insights for test-time exploration strategies.
\citet{arora2022exposure} provides a theoretical and empirical analysis of exposure bias in text generation and \citet{wang2022self} demonstrates that self-consistency can improve the performance of LLMs, using random sampling (noise) to explore multiple reasoning trajectories and then aggregating them.
In this work, we examine how search procedures, particularly beam search, may suffer from over-optimism due to implicit maximization of $Q$-values during inference.
Our work focuses on refining beam search techniques by addressing the over-optimism problem, which can lead to inflated $Q$-value estimates and amplified reasoning errors, particularly in long-horizon or sparse-reward settings.

\section{Conclusion}

In this paper, we formulate LLMs as token-level MDPs and establish a theoretical equivalence between SFT and offline RL, where LLMs implicitly learn a $Q$-function. We further show that beam search, a widely used decoding method, relies on this implicit $Q$-function but suffers from over-optimism due to value estimation errors. Based on that, we propose SOC, a simple auxiliary loss applied during SFT. Despite its simplicity, SOC is theoretically proven to effectively correct optimism, leading to more reliable guidance in inference time. Extensive experiments on mathematical reasoning benchmarks demonstrate that SOC significantly enhances reasoning performance across state-of-the-art open-source models.
\section*{Limitations}
Although we conduct a comprehensive analysis of the over-optimism problem and the proposed method SOC, certain limitations remain, along with potential future research directions to explore:~(1)~While this work mainly focuses on the over-optimism problem of LLMs, it is valuable to investigate the issue of multi-modal models such as Visual Language Models~(VLMs).~(2)~Investigating whether other search-based methods of LLMs encounter this issue is another direction and important to the development of test-time computation.
\section*{Ethical Considerations}
We believe this work contributes to the development of LLMs in the field of NLP. It is worth mentioning that all the experiments are conducted using open-source models and datasets, ensuring no potential social concerns.

\section*{Acknowledgments}
This research is supported by the RIE2025 Industry Alignment Fund – Industry Collaboration Projects (IAF-ICP) (Award I2301E0026), administered by A*STAR, as well as supported by Alibaba Group and NTU Singapore through Alibaba-NTU Global e-Sustainability CorpLab (ANGEL).

\bibliography{custom}

\newpage
\appendix

\section{Proof of Theorem~\ref{thm:convergence_values}}
\label{app: proof}
\begin{theorem}[Contraction of Value Differences]
Let $V_\theta(s)$ be the approximate value function of state $s$. Suppose the value function is always positive for any state $s$. If the objective in SFT includes a additional term in Equation~\eqref{eq: v loss}, then after one step of gradient descent with learning rate $\alpha \in [0,1]$, the gap between adjacent states' values contracts, \textit{i.e.},  
\[
| V'_\theta(s_t) - V'_\theta(s_{t+1}) | \leq | V_\theta(s_t) - V_\theta(s_{t+1}) |, \quad \forall t,
\]
where $V'$ denotes the updated value function after one optimization step.
\end{theorem}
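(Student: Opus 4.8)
The plan is to take the paper's logits-as-implicit-$Q$ viewpoint literally and track what a single gradient step on $\mathcal{L}_V$ does to $V_\theta(s)=\log\sum_a\exp Q_\theta(s,a)$. First I would record that $\mathcal{L}_V=\mathbb{E}_{s\sim\mathcal{D}}[-V_\theta(s)]$, so the auxiliary term is exactly gradient ascent on the soft value of the states visited in $\mathcal{D}$, and compute $\partial_\theta V_\theta(s)=\sum_a\widetilde{Q}_\theta(s,a)\,\partial_\theta Q_\theta(s,a)$ with $\widetilde{Q}_\theta(s,a)=\exp Q_\theta(s,a)/\sum_{a'}\exp Q_\theta(s,a')=\pi_\theta(a\mid s)$; this reproduces the gradient of $\mathcal{L}_V$ already stated in Section~\ref{sec: Effect of the V Loss Update}.

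Second, treating the per-state logits $Q_\theta(s,\cdot)$ at each visited state as the free variables and folding $\lambda$ and the SFT step size into $\alpha$, one descent step sends $Q_\theta(s,a)\mapsto Q_\theta(s,a)+\alpha\,\widetilde{Q}_\theta(s,a)$, so
\[
V'_\theta(s)=\log\sum_a\exp\!\big(Q_\theta(s,a)\big)\exp\!\big(\alpha\widetilde{Q}_\theta(s,a)\big)=V_\theta(s)+\underbrace{\log\mathbb{E}_{a\sim\pi_\theta(\cdot\mid s)}\big[\exp(\alpha\widetilde{Q}_\theta(s,a))\big]}_{=:\,\delta(s)} .
\]
Since $\widetilde{Q}_\theta(s,a)\in[0,1]$ and $\alpha\in[0,1]$, we get $1\le\exp(\alpha\widetilde{Q}_\theta(s,a))\le e^{\alpha}$, hence $0\le\delta(s)\le\alpha\le1$ for every $s$; the positivity hypothesis keeps the logarithms well defined and $V'_\theta(s)>0$ throughout. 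So every value is bumped up by a nonnegative, uniformly bounded amount.

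Third, put $D:=V_\theta(s_t)-V_\theta(s_{t+1})$ and $e:=\delta(s_t)-\delta(s_{t+1})$. A one-line computation shows the claim $|V'_\theta(s_t)-V'_\theta(s_{t+1})|\le|D|$ is equivalent to the sign condition $e\,(e+2D)\le0$, so it suffices to establish (i) that the state with the larger current value receives the smaller bump, so $e$ and $D$ have opposite signs, and (ii) that $|e|\le2|D|$. For (i) I would use that $\delta(s)$ is a monotone functional of the concentration of $\pi_\theta(\cdot\mid s)$ — by Jensen $\delta(s)\ge\alpha\|\pi_\theta(\cdot\mid s)\|_2^2$, and $\delta$ increases as probability mass concentrates — together with the observation that, along a trajectory under an increasingly confident near-expert policy, states closer to the terminal reward both carry larger $V_\theta$ and place more mass on the continuing action. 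I expect (ii) to be the real obstacle: turning \emph{smaller bump at the higher-valued state} into the quantitative bound $|e|\le2|D|$. Here the ceiling $\delta(s)\le\alpha\le1$ and the sparse-reward structure recalled right after the theorem (optimal intermediate values coincide, so any residual $|D|$ is exactly the estimation error the update is meant to shrink) are what I would lean on; if an unconditional argument proves elusive, the weaker statement $|V'_\theta(s_t)-V'_\theta(s_{t+1})|\le|D|+\alpha$ remains a meaningful fallback.
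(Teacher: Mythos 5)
Your proposal does not reach the stated conclusion, and the reason is a divergence at the very first step: what ``one step of gradient descent'' is taken with respect to. The paper's proof does not descend in logit space at all. It identifies $V(s)$ with $\sum_a \exp Q_\theta(s,a)$ (so that $\mathcal{L}_V$ reads as $-\log V(s)$), treats $V(s)$ itself as the free parameter at each state, and takes one tabular gradient step, giving $V'(s) = V(s) + \alpha/V(s)$. The contraction is then a one-line factorization:
\begin{equation*}
V'(s_t)-V'(s_{t+1}) \;=\; \Bigl(1-\tfrac{\alpha}{V(s_t)V(s_{t+1})}\Bigr)\bigl(V(s_t)-V(s_{t+1})\bigr),
\end{equation*}
so the update multiplies the gap by a factor of modulus at most one. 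The crucial structural feature is that the per-state increment $\alpha/V(s)$ is a \emph{decreasing function of $V(s)$ alone}; this immediately yields both the sign condition (the higher-valued state gets the smaller bump) and the magnitude condition that your step three isolates as ``the real obstacle.''

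Under your more faithful modeling --- descent on the logits $Q_\theta(s,\cdot)$, giving $V'(s)=V(s)+\delta(s)$ with $\delta(s)=\log\mathbb{E}_{a\sim\pi_\theta(\cdot\mid s)}[\exp(\alpha\pi_\theta(a\mid s))]$ --- the increment $\delta(s)$ is \emph{not} a function of $V(s)$: it depends on the shape of the action distribution (roughly its collision probability $\lVert\pi_\theta(\cdot\mid s)\rVert_2^2$). Hence the claim is actually false in that setting: take $V(s_t)=V(s_{t+1})$ (so $D=0$) but let $\pi_\theta(\cdot\mid s_t)$ be near-deterministic and $\pi_\theta(\cdot\mid s_{t+1})$ near-uniform; then $\delta(s_t)>\delta(s_{t+1})$ and the post-update gap is strictly positive. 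So no amount of work on conditions (i) and (ii) can close the argument without extra hypotheses, and your honest fallback $|D|+\alpha$ is the best one can say there. The gap in your proposal is therefore the missing (admittedly crude) modeling step that makes the increment a monotone function of the value itself. For completeness, note that the paper's own last inequality requires $V(s_t)V(s_{t+1})\ge \alpha/2$, which positivity plus $\alpha\in[0,1]$ alone does not guarantee --- so the discrepancy you ran into also reflects a genuine looseness in how the theorem's update rule is specified.
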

\begin{proof} Recall that the auxiliary objective is to minimize \( -\log V(s) \). After one step of gradient descent, the updated value function becomes \( V'(s) = V(s) + \alpha \cdot \frac{1}{V(s)} \). Now, we can compute the difference between the updated values for any adjacent states \( s_t \) and \( s_{t+1} \):
\begin{equation}
    \begin{aligned}
&\left| V'(s_t) - V'(s_{t+1}) \right|\\ & = \left| V(s_t) - V(s_{t+1}) + \alpha \left( \frac{1}{V(s_t)} - \frac{1}{V(s_{t+1})} \right) \right| \\
&\leq \left| 1 - \frac{\alpha}{V(s_t) V(s_{t+1})} \right| \cdot \left| V(s_t) - V(s_{t+1}) \right| \\
&\leq \left| V(s_t) - V(s_{t+1}) \right|.
    \end{aligned}
\end{equation}
The last inequality holds since we assume the value function is always positive and learning rate $\alpha \in [0,1]$. Thus, we see that after one step of gradient descent, the gap between the updated value functions of adjacent states is no greater than the original gap.
\end{proof}

\section{The Derivation for Estimation Error}
\label{app: The Derivation of Estimation Error}
We provide a proof sketch for proof of Equation~\eqref{eq: gradient of sft}. We start from SFT objective, which is given by:
\begin{equation}
  \mathcal{L}_{\text{SFT}} = \mathbb{E}_{s\sim \mathcal{D}}\left[-\sum_{a} \pi^*(a|s) \log \pi_\theta(a|s) \right].
\end{equation}
To compute its gradient, we first differentiate the softmax function:
\begin{equation}
  \pi_\theta(a|s) = \frac{\exp\big(Q_\theta(a|s)\big)}{\sum_{a'} \exp\big(Q_\theta(a'|s)\big)}.
\end{equation}
Taking the gradient with respect to the logits $Q_\theta(a|s)$, we obtain:
\begin{equation}
  \frac{\partial \pi_\theta(a|s)}{\partial Q_\theta(a'|s)} = \pi_\theta(a|s) \left( \mathbb{I}[a = a'] - \pi_\theta(a'|s) \right).
\end{equation}
Next, we differentiate the loss function:
\begin{equation}
  \nabla_\theta \mathcal{L}_{\text{SFT}}=\mathbb{E}_{s\sim \mathcal{D}}\left[ -\sum_{a} \pi^*(a|s) \frac{\partial \log \pi_\theta(a|s)}{\partial \theta} \right].
\end{equation}
Since
\begin{equation}
\frac{\partial \log \pi_\theta(a|s)}{\partial \theta} = \frac{1}{\pi_\theta(a|s)} \frac{\partial \pi_\theta(a|s)}{\partial \theta},
\end{equation}
we obtain:
\begin{equation}
-\sum_{a} \pi^*(a|s) \sum_{a'} \frac{1}{\pi_\theta(a|s)} \frac{\partial \pi_\theta(a|s)}{\partial Q_\theta(a'|s)} \frac{\partial Q_\theta(a'|s)}{\partial \theta} .
\end{equation}
Substituting the gradient of $\pi_\theta(a|s)$:
\begin{equation}
 -\sum_{a} \pi^*(a|s) \sum_{a'} \left( \mathbb{I}[a\!=\!a']\!-\!\pi_\theta(a'|s) \right) \frac{\partial Q_\theta(a'|s)}{\partial \theta}.
\end{equation}
Rearranging the terms, we obtain:
\begin{equation}
  \nabla_\theta  \mathcal{L}_{\text{SFT}}\!=\!\mathbb{E}_{s\sim \mathcal{D}}\left[ \sum_{a} \left( \pi_\theta(a|s)\!-\!\pi^*(a|s) \right) \frac{\partial Q_\theta(a|s)}{\partial \theta} \right].
\end{equation}
We finish the proof by replacing $\pi$ with normalized $\widetilde{Q}$, which is equivalent and unambiguous.

\section{Prompt for Evaluation}
\label{app: prompt}
Here we provide the prompt for evaluation in our experiments. Following previous work~\cite{qwen2.5math}, we use CoT with few-shots demonstrations to prompt LLMs to solve mathematical problems. The two kinds of templates are shown in Figure~\ref{fig: gsm_qcot} and Figure~\ref{fig: gsm_cot}.
\begin{figure}[h]
  \includegraphics[width=\linewidth]{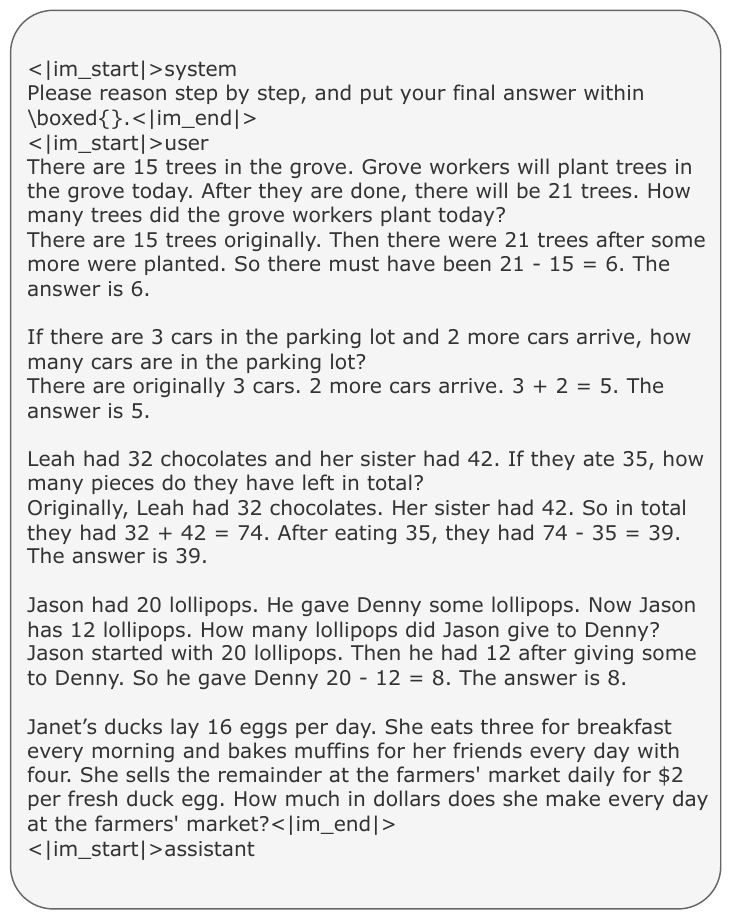}
  \caption {Template 1 for prompt.}
    \label{fig: gsm_qcot}
\end{figure}

\begin{figure}[h]
  \includegraphics[width=\linewidth]{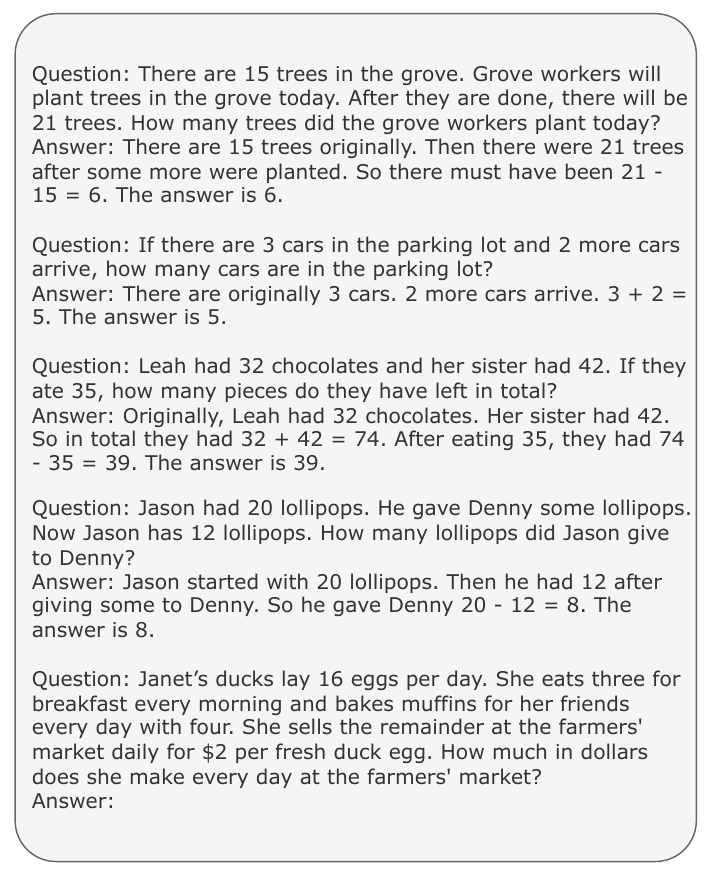}
  \caption {Template 2 for prompt.}
    \label{fig: gsm_cot}
\end{figure}

\end{document}